\documentclass{article}
\pdfoutput=1
\usepackage[final]{neurips_2018}

\usepackage[utf8]{inputenc} 
\usepackage[T1]{fontenc}    
\usepackage{hyperref}       
\usepackage{url}            
\usepackage{booktabs}       
\usepackage{amsfonts}       
\usepackage{nicefrac}       
\usepackage{microtype}      

\usepackage{amsthm}
\usepackage{amsmath}
\newtheorem{theorem}{Theorem}
\newtheorem{lemma}{Lemma}
\newtheorem{definition}{Definition}
\newtheorem{remark}{Remark}
\DeclareMathOperator*{\argmin}{arg\,min}

\title{The Pessimistic Limits and Possibilities of Margin-based Losses in Semi-supervised Learning}

%

\author{
  Jesse H.~Krijthe\\
  Radboud University, The Netherlands \\
  \texttt{jkrijthe@gmail.com} \\
  \And
  Marco Loog \\
  Delft University of Technology, The Netherlands \\
  University of Copenhagen, Denmark\\
  \texttt{m.loog@tudelft.nl}
}

\begin{document}

\maketitle

\begin{abstract}
Consider a classification problem where we have both labeled and unlabeled data available.  We show that for linear classifiers defined by convex margin-based surrogate losses that are decreasing,  it is impossible to construct \emph{any} semi-supervised approach that is able to guarantee an improvement over the supervised classifier measured by this surrogate loss on the labeled and unlabeled data. For convex margin-based loss functions that also increase, we demonstrate safe improvements \emph{are} possible.
\end{abstract}

\section{Introduction}

Semi-supervised learning has been reported to deliver encouraging results in various settings, e.g. for object detection in computer vision \citep{Rasmus2015}, protein function prediction from sequence data \citep{Weston2005} or prediction of cancer recurrence \citep{shi2011} in the bio-medical domain and part-of-speech tagging in natural language processing \citep{Elworthy1994}.  In other settings, however, using unlabeled data has been shown to lead to a decrease in performance when compared to the supervised solution \citep{Elworthy1994, Cozman2006}. For semi-supervised classifiers to be used safely in practice, we may at least want some guarantee that they do not reduce performance compared to their supervised alternatives. Some have attempted to provide such guarantees either empirically by restrictions on the parameters to be estimated \citep{Loog2010} or under particular assumptions on the data \citep{Li2015}. In general, however, it is unclear for what classifiers one can construct `safe' semi-supervised approaches that can be expected to not decrease performance, or whether this is at all possible.

\subsection{Safety and Pessimism}
This work explores whether and, if so, how we can guarantee unlabeled data to improve, or at least not decrease the performance of a semi-supervised classifier in comparison to a supervised classifier. ‘Pessimism’ refers to the property that we want this guarantee to hold for every single instantiation of a problem, even for the worst possible unknown labeling of the unlabeled data. The reason we choose such a strict criterion is that it is the only criterion that can guarantee (with probability one), that performance degradation will not occur, for the particular dataset one is faced with. Therefore, a semi-supervised approach can only be called truly safe if it guarantees non-degradation of performance in this pessimistic sense. Note that the labelings that we will be considering are not as pessimistic as they first appear: because performance is compared to the supervised classifier, these labelings will be optimistic with respect to the supervised classifier, as will become apparent when we formally define the criterion for safe semi-supervised learning in Equation \eqref{eq:hardcondition}.

We compare the performance of the supervised and semi-supervised classifier measured on the labeled and unlabeled data. This is, strictly speaking, a transductive setting \citep{Joachims1999}, where one measures performance on a specifically defined set of objects, and not a semi-supervised setting where one measures performance on unseen objects generated from the same distribution as the training data. There are two reasons this transductive setting is interesting in the context of safe semi-supervised learning. 1. The performance criterion in this setting corresponds to the performance criterion we would observe and optimize for if the labels for all objects would be available. 2. As the number of unlabeled objects grows, and they start to better represent the distribution of interest in the inductive/semi-supervised setting, the limits and possibilities that we derive continue to hold, while we converge to the setting where the marginal distribution of the inputs is assumed to be known that is considered in other work on the possibilities of semi-supervised learning \citep{Sokolovska2008, Ben-David2008}. All in all, while we consider a transductive setting, it should be clear that our analysis does provide valid insights into the safe semi-supervised setting as well.

\subsection{The Use of Surrogate Losses}

Important in this work, is that we take the view that a semi-supervised version of, for instance, logistic regression is a classifier that still attempts to minimize logistic loss, but uses unlabeled data to improve its ability to do so. So it should be judged on how well it generalizes in terms of this intrinsic loss. If we were to compare performance in terms of some other loss, like the error rate, one runs the risk of attributing improvements to the use of unlabeled data that are, in fact, caused by other changes to the classifier. For instance, the semi-supervised classifier might implicitly use some other surrogate loss that turns out to be better aligned with the loss used for evaluation.

Therefore, as our definition of performance, we consider the surrogate loss the classifier optimizes and compare this loss for the supervised and the semi-supervised learner on the combined labeled and unlabeled data. The surrogate loss corresponds to the loss one would minimize if we did have labels for the unlabeled objects. Considering the same criterion in the supervised and semi-supervised case aligns the goal of constructing a semi-supervised classifier with the one used when constructing a supervised classifier.  It avoids conflating improved performance based on a change in surrogate loss function with improvements gained by the availability of unlabeled data. For the same reason we also keep the regularization parameter fixed in the objective functions of the supervised and semi-supervised classifiers. 

\subsection{Outline}
The main conclusion from our analysis (Theorems~\ref{theorem:hardlabels} and \ref{theorem:softlabels}) is that for classifiers defined by convex margin-based surrogate losses that are decreasing, it is impossible to come up with \emph{any} semi-supervised approach that is able to guarantee safe improvement. We also consider the case of losses that are not decreasing and in particular study the quadratic loss.  We show under what conditions it \emph{is} possible in this case to come up with a semi-supervised classifier that provides safe improvements over the supervised classifier.

The rest of this work is structured as follows. We start by introducing margin-based loss functions in the empirical risk minimization framework and the extension to the semi-supervised setting.  In this, we only treat binary linear classifiers. Though not a real restriction, it does simplify our exposition and allows us to focus on the core ideas.  In Section~\ref{section:limits}, we formalize our strict notion of safe semi-supervised learning. We first show that for the class of decreasing loss functions it is impossible to derive any semi-supervised learning strategy that is not worse than the supervised classifier for all possible labelings of the unlabeled data. We then consider the case of soft assignment of unlabeled objects to classes. Here, too, it is impossible to provide a strict improvement guarantee for this class of loss functions.  We subsequently show for what losses it is possible to get strict improvements. In Section~\ref{section:Examples} we apply the theory to a few well-known loss functions. In Section~\ref{section:Discussion} we discuss how these results relate to other results on the (im)possibility of (safe) semi-supervised learning and what the implications of these results are for other safe approaches.

\section{Preliminaries}
We consider binary linear classifiers in the empirical risk minimization framework. Let $\mathbf{X}$ be an $L \times d$ design matrix of $L$ labeled objects, where each row $\mathbf{x}^\top$ is a $d$-dimensional vector of feature values corresponding to each labeled object. Let $\mathbf{y} \in \{{-1},{+1}\}^L$ be the corresponding label vector. The vector $\mathbf{w} \in \mathbb{R}^d$ contains the weights defining a linear classifier through $\mathrm{sign}(\mathbf{x}^\top \mathbf{w})$. We consider convex margin-based surrogate loss functions, which are loss functions of the form $\phi(y \mathbf{x}^\top \mathbf{w})$. Many well-known classifiers can be described in this way, including support vector machines, least squares classification, least squares support vector machines and logistic regression \citep{Bartlett2006}.

\subsection{Empirical Risk Minimization}

In the empirical risk minimization framework a classifier is obtained by minimizing a chosen surrogate loss $\phi$ over a set of training objects plus an optional regularization term $\Omega$, which we take to be a convex function of $\mathbf{w}$:
\begin{equation} \label{eq:supervisedrisk}
R_\phi(\mathbf{w},\mathbf{X},\mathbf{y}) = \sum_{i=1}^L \phi(y_i \mathbf{x}_i^\top \mathbf{w}) + \lambda \Omega(\mathbf{w})\, .
\end{equation}
By minimizing this with respect to $\mathbf{w}$ we get a supervised classifier:
\[ \mathbf{w}_\mathrm{sup} = \argmin_\mathbf{w} R_\phi(\mathbf{w},\mathbf{X},\mathbf{y}) \, .\]

In the semi-supervised setting, we have an additional design matrix corresponding to unlabeled objects $\mathbf{X}_\mathrm{u}$, sized $U \times d$, with unknown labels $\mathbf{y}_\mathrm{u} \in \{{-1},{+1}\}^U$.
We therefore consider the corresponding semi-supervised risk function:
\begin{equation} \label{eq:semisupervisedrisk}
{R}^\mathrm{semi}_\phi(\mathbf{w},\mathbf{X},\mathbf{y},\mathbf{X}_\mathrm{u},\mathbf{q}) =  R_\phi(\mathbf{w},\mathbf{X},\mathbf{y}) + \sum_{i=1}^{U} q_i \phi(\mathbf{x}_i^\top \mathbf{w}) + (1-q_i) \phi(-\mathbf{x}_i^\top \mathbf{w}) \, ,
\end{equation}
where $\mathbf{q} \in [0,1]^U$ are what we will refer to as \emph{responsibilities}, indicating the unknown and possibly `soft' membership of each object to a class. For instance, if the true labels were known these would correspond to `hard' responsibilities $\mathbf{q}^\mathrm{true} \in \{0,1\}^U$ and the semi-supervised risk formulation becomes equal to the supervised risk formulation in Equation \eqref{eq:supervisedrisk}, where the sum is now over the $L$ labeled objects and the $U$ objects for which we did not have a label.

\section{Limits of Safe Semi-supervision} \label{section:limits}

Even though we know the true labeling of the unlabeled objects in Equation (\ref{eq:semisupervisedrisk}) belongs to some $\mathbf{q} \in \{0,1\}^U$, we do not know which one. Now, a semi-supervised procedure $\mathbf{w}_\mathrm{semi}$ is \emph{safe} if it is guaranteed to attain a loss on the labeled and unlabeled objects equal to or lower than the supervised solution for all possible labelings of the data, since this is guaranteed to include the true labeling of the unlabeled objects. We first formalize this definition of safety, then consider the cases of hard and soft labeling, and come to our negative results: for many loss functions safe semi-supervision is, in fact, not possible. Positive results follow in Section~\ref{section:possibilities}.

\subsection{Hard labeling}

Let $ {D}_\phi$ denote the difference in terms of the chosen loss $\phi$ on a set of objects between a new classifier $\mathbf{w}$ and the supervised classifier $\mathbf{w}_\mathrm{sup}$ for some set of responsibilities for the unlabeled data:
\begin{equation} \label{eq:difference}
{D}_\phi(\mathbf{w},\mathbf{w}_\mathrm{sup},\mathbf{X},\mathbf{y},\mathbf{X}_\mathrm{u},\mathbf{q}) =  {R}^\mathrm{semi}_\phi(\mathbf{w},\mathbf{X},\mathbf{y},\mathbf{X}_\mathrm{u},\mathbf{q}) \nonumber  - {R}^\mathrm{semi}_\phi(\mathbf{w}_\mathrm{sup},\mathbf{X},\mathbf{y},\mathbf{X}_\mathrm{u},\mathbf{q}) \, .
\end{equation}
The true unknown labels can, in principle, correspond to any $\mathbf{q} \in \{0,1\}^U$. For a semi-supervised classifier $\mathbf{w}_\mathrm{semi}$ to be \emph{safe} we therefore need that:
\begin{equation} \label{eq:hardcondition}
\max_{\mathbf{q} \in \{{0},{1}\}^U} {D}_\phi(\mathbf{w}_\mathrm{semi},\mathbf{w}_\mathrm{sup},\mathbf{X},\mathbf{y},\mathbf{X}_\mathrm{u},\mathbf{q}) \leq 0 \,.
\end{equation}
If the inequality is strict for at least one instantiation of $\mathbf{q}$, the semi-supervised solution is different from the supervised solution and potentially better.
Is it possible to construct some semi-supervised strategy  that has this guaranteed improvement over the supervised solution for margin-based surrogate losses? The following theorem gives a condition under which this strict improvement is never possible.

\begin{theorem} \label{theorem:hardlabels}
Let $\mathbf{w}_\mathrm{sup}$ be a minimizer of $ {R}_\phi(\mathbf{w},\mathbf{X},\mathbf{y})$ and assume it is unique. If $\phi$ is a decreasing margin-based loss function, meaning $\phi(a) \geq \phi(b)$ for $a\leq b$, then there is no safe semi-supervised procedure which guarantees Equation~\eqref{eq:hardcondition} while having at least one $\mathbf{q}^\ast \in \{0,1\}^U$ for which $ {D}_\phi(\mathbf{w}_\mathrm{semi},\mathbf{w}_\mathrm{sup},\mathbf{X},\mathbf{y},\mathbf{X}_\mathrm{u},\mathbf{q}^\ast) < 0$\,.
\end{theorem}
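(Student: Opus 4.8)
The plan is to bound the left-hand side of the safety condition~\eqref{eq:hardcondition} from below, showing it is always nonnegative and can vanish only when $\mathbf{w}_\mathrm{semi}=\mathbf{w}_\mathrm{sup}$; the claimed impossibility then follows at once.

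First I would rewrite the difference $D_\phi$ of~\eqref{eq:difference}. For an arbitrary candidate $\mathbf{w}_\mathrm{semi}$, subtracting the two semi-supervised risks gives, for every $\mathbf{q}\in\{0,1\}^U$,
\[
D_\phi(\mathbf{w}_\mathrm{semi},\mathbf{w}_\mathrm{sup},\mathbf{X},\mathbf{y},\mathbf{X}_\mathrm{u},\mathbf{q}) \;=\; \Delta \;+\; \sum_{i=1}^U\bigl(q_i a_i+(1-q_i)b_i\bigr),
\]
where $\Delta := R_\phi(\mathbf{w}_\mathrm{semi},\mathbf{X},\mathbf{y})-R_\phi(\mathbf{w}_\mathrm{sup},\mathbf{X},\mathbf{y})\geq 0$ since $\mathbf{w}_\mathrm{sup}$ minimizes $R_\phi(\cdot,\mathbf{X},\mathbf{y})$, and $a_i := \phi(\mathbf{x}_i^\top\mathbf{w}_\mathrm{semi})-\phi(\mathbf{x}_i^\top\mathbf{w}_\mathrm{sup})$, $b_i := \phi(-\mathbf{x}_i^\top\mathbf{w}_\mathrm{semi})-\phi(-\mathbf{x}_i^\top\mathbf{w}_\mathrm{sup})$. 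Each summand is affine in $q_i$ and the $q_i$ range over $\{0,1\}$ independently, so the maximization decouples coordinate-wise: $\max_{\mathbf{q}\in\{0,1\}^U} D_\phi = \Delta + \sum_{i=1}^U \max(a_i,b_i)$.

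The crux is to show $\max(a_i,b_i)\geq 0$ for every $i$, and this is precisely where the hypothesis that $\phi$ is decreasing enters. Writing $s:=\mathbf{x}_i^\top\mathbf{w}_\mathrm{semi}$ and $t:=\mathbf{x}_i^\top\mathbf{w}_\mathrm{sup}$: if $s\leq t$ then $\phi(s)\geq\phi(t)$, so $a_i\geq 0$; if instead $s\geq t$ then $-s\leq -t$, so $\phi(-s)\geq\phi(-t)$, i.e.\ $b_i\geq 0$. One of the two cases always holds, hence $\max(a_i,b_i)\geq 0$. Intuitively, for a decreasing loss, labelling an unlabeled point so as to place the burden on whichever side $\mathbf{w}_\mathrm{semi}$ is already no better than $\mathbf{w}_\mathrm{sup}$ never decreases $\mathbf{w}_\mathrm{semi}$'s loss relative to $\mathbf{w}_\mathrm{sup}$ on that point. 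This argument uses only $\phi(a)\geq\phi(b)$ for $a\leq b$, so it applies equally to non-differentiable and to merely (non-strictly) decreasing losses.

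Combining the steps, $\max_{\mathbf{q}\in\{0,1\}^U} D_\phi \geq \Delta \geq 0$. If the procedure is safe, \eqref{eq:hardcondition} forces this maximum to be $\leq 0$, hence $=0$, which in turn forces $\Delta=0$; uniqueness of $\mathbf{w}_\mathrm{sup}$ then yields $\mathbf{w}_\mathrm{semi}=\mathbf{w}_\mathrm{sup}$, so $D_\phi(\mathbf{w}_\mathrm{semi},\mathbf{w}_\mathrm{sup},\mathbf{X},\mathbf{y},\mathbf{X}_\mathrm{u},\mathbf{q})=0$ for all $\mathbf{q}$ and no $\mathbf{q}^\ast$ with $D_\phi(\mathbf{w}_\mathrm{semi},\mathbf{w}_\mathrm{sup},\mathbf{X},\mathbf{y},\mathbf{X}_\mathrm{u},\mathbf{q}^\ast)<0$ can exist. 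The only routine points are the separability of the maximum over the box $\{0,1\}^U$ and the sign bookkeeping in the decomposition; the genuine content is the one-line monotonicity bound on $\max(a_i,b_i)$, which I expect to be the main obstacle to get cleanly (in particular, stating it so that no strict monotonicity or differentiability is needed).
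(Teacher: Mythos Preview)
Your proof is correct and follows essentially the same approach as the paper's: decompose $D_\phi$ into the supervised gap plus per-object unlabeled terms, use the decreasingness of $\phi$ to show that for each unlabeled point one of the two label choices yields a nonnegative term, and invoke uniqueness of $\mathbf{w}_\mathrm{sup}$ to force $\mathbf{w}_\mathrm{semi}=\mathbf{w}_\mathrm{sup}$. The only cosmetic difference is that the paper phrases this as a contradiction and explicitly constructs the adversarial $\mathbf{q}^\mathrm{new}$, whereas you compute $\max_{\mathbf q} D_\phi$ directly.
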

\begin{proof}
We are going to prove this by contradiction. Assume $ {D}_\phi(\mathbf{w}_\mathrm{semi},\mathbf{w}_\mathrm{sup},\mathbf{X},\mathbf{y},\mathbf{X}_\mathrm{u},\mathbf{q}^\ast) < 0$ and define $M$ to be $R_\phi(\mathbf{w}_\mathrm{semi},\mathbf{X},\mathbf{y})-R_\phi(\mathbf{w}_\mathrm{sup},\mathbf{X},\mathbf{y})$.  The latter is the difference in the supervised objective function between the semi-supervised and supervised classifier.   Based on our assumption we can now write
\begin{align} \label{eq:D_qast}
M + & \sum_{i=1}^{U}  q^\ast_i (\phi(\mathbf{x}_i^\top \mathbf{w}_\mathrm{semi})- \phi(\mathbf{x}_i^\top \mathbf{w}_\mathrm{sup})) & \\
 &+ (1-q^\ast_i) (\phi(-\mathbf{x}_i^\top \mathbf{w}_\mathrm{semi}) - \phi(-\mathbf{x}_i^\top \mathbf{w}_\mathrm{sup})) & < 0 \, . \nonumber
\end{align}

Let $A_i = \phi(\mathbf{x}_i^\top \mathbf{w}_\mathrm{semi})- \phi(\mathbf{x}_i^\top \mathbf{w}_\mathrm{sup})$ and $B_i = \phi(-\mathbf{x}_i^\top \mathbf{w}_\mathrm{semi})- \phi(-\mathbf{x}_i^\top \mathbf{w}_\mathrm{sup})$. Since $\phi$ is decreasing, either $A_i \geq 0$ and $B_i \leq 0$, or $A_i \leq 0$ and $B_i \geq 0$. Set $q^\mathrm{new}_i=1$ in the former case and $q^\mathrm{new}_i=0$ in the latter. Then, when using $\mathbf{q}^\mathrm{new}$ instead of $\mathbf{q}^\ast$ in Equation~\eqref{eq:D_qast}, the sum will be non-negative. Also, $M > 0$, because $\mathbf{w}_\mathrm{sup}$ is the unique minimizer of ${R}_\phi(\mathbf{w},\mathbf{X},\mathbf{y})$ and $\mathbf{w}_\mathrm{semi} \neq \mathbf{w}_\mathrm{sup}$. We therefore have that 
\[ {D}_\phi(\mathbf{w}_\mathrm{semi},\mathbf{w}_\mathrm{sup},\mathbf{X},\mathbf{y},\mathbf{X}_\mathrm{u},\mathbf{q}^\mathrm{new}) > 0 \, .\]
which contradicts Equation~\eqref{eq:hardcondition}.
\end{proof}

\begin{remark}
Alternatively, we can drop the requirement that $\mathbf{w}_\mathrm{sup}$ is the unique minimizer of $ {R}_\phi(\mathbf{w},\mathbf{X},\mathbf{y})$ by requiring the loss functions to be strictly decreasing.
\end{remark}


\subsection{Beyond Hard Labelings}
In Equation \eqref{eq:hardcondition} we considered improvement over all hard labelings of the unlabeled data. Alternatively we could also consider improvements for the larger set of all soft assignments of objects to classes, defining safety to mean
\begin{equation} \label{eq:softcondition}
\max_{\mathbf{q} \in [0,1]^U} {D}_\phi(\mathbf{w}_\mathrm{semi},\mathbf{w}_\mathrm{sup},\mathbf{X},\mathbf{y},\mathbf{X}_\mathrm{u},\mathbf{q}) \leq 0 \,.
\end{equation}
If there is at least one $\mathbf{q} \in [0,1]^U$ for which the inequality is strict, the semi-supervised solution is potentially better than the supervised solution. There are several reasons why this is an interesting relaxation to consider. First of all, it requires the semi-supervised solution to guarantee improvements for a larger class of responsibilities than just the hard labelings, meaning it becomes more difficult to construct a procedure with this property. If a procedure guarantees improvement in this sense, it implies it also works for all possible hard labelings. Secondly, it corresponds to a scenario different from the hard labeling where there is uncertainty in the labels of the unlabeled objects. And lastly, the convex constraint makes the problem more amenable to analysis. 

The set of classifiers induced by all different responsibilities turns out to be a useful concept in the remainder of this paper.
\begin{definition}\label{def:conset}
The constraint set $\mathcal{C}_\phi$ is the set of all possible classifiers that can be obtained by minimizing the semi-supervised loss for any vector of responsibilities $\mathbf{q}$ assigned to the unlabeled data, i.e.,
\[\mathcal{C}_\phi = \left\{ \argmin_\mathbf{w} {R}^\mathrm{semi}_\phi(\mathbf{w},\mathbf{X},\mathbf{y},\mathbf{X}_\mathrm{u},\mathbf{q}) \Big| \mathbf{q} \in [0,1]^U \right\} \, . \]
\end{definition}

The following lemma provides an intermediary step towards our second negative result.  It tells us that no strict improvement is possible if the supervised solution is already part of the constraint set.
\begin{lemma} \label{lemma:limitconstrainedspace}
If ${R}_\phi(\mathbf{w},\mathbf{X},\mathbf{y})$ is strictly convex and $\mathbf{w}_\mathrm{sup} \in \mathcal{C}_\phi$, then there is a soft assignment $\mathbf{q}^\ast$ such that for every choice of semi-supervised classifier $\mathbf{w}_\mathrm{semi} \neq \mathbf{w}_\mathrm{sup}$,  ${D}_\phi(\mathbf{w}_\mathrm{semi},\mathbf{w}_\mathrm{sup},\mathbf{X},\mathbf{y},\mathbf{X}_\mathrm{u},\mathbf{q}^\ast) > 0$.
\end{lemma}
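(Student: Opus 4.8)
The plan is to use the hypothesis $\mathbf{w}_\mathrm{sup} \in \mathcal{C}_\phi$ to choose the witnessing soft assignment, and then to upgrade ``$\mathbf{w}_\mathrm{sup}$ is \emph{a} minimizer'' to ``$\mathbf{w}_\mathrm{sup}$ is \emph{the} minimizer'' via the strict convexity assumption. First, by Definition~\ref{def:conset}, membership $\mathbf{w}_\mathrm{sup} \in \mathcal{C}_\phi$ means there exists some $\mathbf{q}^\ast \in [0,1]^U$ with $\mathbf{w}_\mathrm{sup} \in \argmin_\mathbf{w} {R}^\mathrm{semi}_\phi(\mathbf{w},\mathbf{X},\mathbf{y},\mathbf{X}_\mathrm{u},\mathbf{q}^\ast)$; this is the $\mathbf{q}^\ast$ I would put forward in the statement.

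Second, I would observe that ${R}^\mathrm{semi}_\phi(\cdot,\mathbf{X},\mathbf{y},\mathbf{X}_\mathrm{u},\mathbf{q}^\ast)$ is itself strictly convex in $\mathbf{w}$. By Equation~\eqref{eq:semisupervisedrisk} it equals $R_\phi(\mathbf{w},\mathbf{X},\mathbf{y})$ plus $\sum_{i=1}^U q^\ast_i \phi(\mathbf{x}_i^\top \mathbf{w}) + (1-q^\ast_i)\phi(-\mathbf{x}_i^\top \mathbf{w})$. Since $\phi$ is convex and composition with the affine maps $\mathbf{w} \mapsto \pm\,\mathbf{x}_i^\top \mathbf{w}$ preserves convexity, and since the coefficients $q^\ast_i$ and $1-q^\ast_i$ are nonnegative, this added term is a convex function of $\mathbf{w}$. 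The sum of a strictly convex function (namely $R_\phi(\cdot,\mathbf{X},\mathbf{y})$, by assumption) and a convex function is strictly convex, so ${R}^\mathrm{semi}_\phi(\cdot,\mathbf{q}^\ast)$ has a unique minimizer. Combining the two points, $\mathbf{w}_\mathrm{sup}$ is that unique minimizer, hence for every $\mathbf{w}_\mathrm{semi} \neq \mathbf{w}_\mathrm{sup}$ we get ${R}^\mathrm{semi}_\phi(\mathbf{w}_\mathrm{semi},\mathbf{q}^\ast) > {R}^\mathrm{semi}_\phi(\mathbf{w}_\mathrm{sup},\mathbf{q}^\ast)$, which by the definition of ${D}_\phi$ is exactly ${D}_\phi(\mathbf{w}_\mathrm{semi},\mathbf{w}_\mathrm{sup},\mathbf{X},\mathbf{y},\mathbf{X}_\mathrm{u},\mathbf{q}^\ast) > 0$.

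There is no real obstacle here: the lemma is a bookkeeping step that isolates the role of convexity before the soft-labeling impossibility theorem. The only point requiring slight care is the elementary but essential fact that adding a convex function to a strictly convex one yields a strictly convex function --- this is precisely what promotes the ``an argmin'' in the definition of $\mathcal{C}_\phi$ to genuine uniqueness of the minimizer, and thus to the strict inequality the lemma asserts.
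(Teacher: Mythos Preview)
Your argument is correct and follows essentially the same route as the paper: pick $\mathbf{q}^\ast$ from the definition of $\mathcal{C}_\phi$, note that the semi-supervised risk inherits strict convexity from $R_\phi$, and conclude uniqueness of the minimizer and hence the strict inequality. The only difference is that you spell out the ``strictly convex plus convex is strictly convex'' step in more detail than the paper does.
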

\begin{proof}
As $\mathbf{w}_\mathrm{sup} \in \mathcal{C}_\phi$ there is a soft labeling $\mathbf{q}^\ast$ such that $\mathbf{w}_\mathrm{sup}$ minimizes the semi-supervised risk ${R}^\mathrm{semi}_\phi(\mathbf{w},\mathbf{X},\mathbf{y},\mathbf{X}_\mathrm{u},\mathbf{q}^\ast)$.  This risk function is strictly convex because the supervised risk is strictly convex and therefore $\mathbf{w}_\mathrm{sup}$ is its unique minimizer.  This immediately implies that for every $\mathbf{w}_\mathrm{semi} \neq \mathbf{w}_\mathrm{sup}$, we have that ${R}^\mathrm{semi}_\phi(\mathbf{w}_\mathrm{semi},\mathbf{X},\mathbf{y},\mathbf{X}_\mathrm{u},\mathbf{q}^\ast) > {R}^\mathrm{semi}_\phi(\mathbf{w}_\mathrm{sup},\mathbf{X},\mathbf{y},\mathbf{X}_\mathrm{u},\mathbf{q}^\ast)$.
\end{proof}

For decreasing margin-based losses, we now show that we can always explicitly construct a $\mathbf{q}^\ast$, such that the corresponding semi-supervised solution equals the original supervised one.  With this, a result similar to Theorem~\ref{theorem:hardlabels} for the soft-assignment guarantee directly follows, but first we formulate that explicit construction of the necessary soft labeling.
\begin{lemma}\label{lemma:responsibilities}
If $\phi$ is a decreasing convex margin-based loss function where for each unlabeled object $\mathbf{x}$, the derivatives $\phi'(-\mathbf{x}^\top \mathbf{w}_\mathrm{sup})$ and $\phi'(\mathbf{x}^\top \mathbf{w}_\mathrm{sup})$ exist, we can recover $\mathbf{w}_\mathrm{sup}$ by minimizing the semi-supervised loss by assigning  responsibilities $\mathbf{q} \in [0,1]^U$ as
\begin{equation}\label{eq:solutionq}
q = \frac{\phi'(-\mathbf{x}^\top \mathbf{w}_\mathrm{sup})}{\phi'(\mathbf{x}^\top \mathbf{w}_\mathrm{sup}) + \phi'(-\mathbf{x}^\top \mathbf{w}_\mathrm{sup})} \,,
\end{equation}
if $\phi'(\mathbf{x}^\top \mathbf{w}_\mathrm{sup}) + \phi'(-\mathbf{x}^\top \mathbf{w}_\mathrm{sup}) \neq 0$, and any $q \in [0,1]$ otherwise.
\end{lemma}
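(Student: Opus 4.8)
The plan is to use the first-order optimality characterization for convex functions. Write ${R}^\mathrm{semi}_\phi(\mathbf{w},\mathbf{X},\mathbf{y},\mathbf{X}_\mathrm{u},\mathbf{q}) = {R}_\phi(\mathbf{w},\mathbf{X},\mathbf{y}) + G(\mathbf{w})$, where $G(\mathbf{w}) = \sum_{i=1}^U q_i \phi(\mathbf{x}_i^\top\mathbf{w}) + (1-q_i)\phi(-\mathbf{x}_i^\top\mathbf{w})$ collects the unlabeled terms; both summands are convex in $\mathbf{w}$. Since $\mathbf{w}_\mathrm{sup}$ minimizes the convex function ${R}_\phi(\cdot,\mathbf{X},\mathbf{y})$, its one-sided directional derivative satisfies ${R}_\phi'(\mathbf{w}_\mathrm{sup};\mathbf{d}) \geq 0$ in every direction $\mathbf{d}$. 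Directional derivatives are additive, so it is enough to pick $\mathbf{q}$ such that $G'(\mathbf{w}_\mathrm{sup};\mathbf{d}) = 0$ for all $\mathbf{d}$; then $({R}^\mathrm{semi}_\phi)'(\mathbf{w}_\mathrm{sup};\mathbf{d}) \geq 0$ for all $\mathbf{d}$, which for a convex function means $\mathbf{w}_\mathrm{sup}$ is a global minimizer of ${R}^\mathrm{semi}_\phi$, i.e., it is recovered. (Equivalently, one can phrase this via subgradients: $0 \in \partial {R}_\phi(\mathbf{w}_\mathrm{sup})$, and the subdifferential sum rule applies because the perturbation $G$ is differentiable at $\mathbf{w}_\mathrm{sup}$.)

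Next I would verify that $G$ is differentiable at $\mathbf{w}_\mathrm{sup}$ and compute its gradient. Each map $\mathbf{w} \mapsto \phi(\pm\mathbf{x}_i^\top\mathbf{w})$ has, at $\mathbf{w}_\mathrm{sup}$, directional derivative $\pm(\mathbf{x}_i^\top\mathbf{d})\,\phi'(\pm\mathbf{x}_i^\top\mathbf{w}_\mathrm{sup})$: the hypothesis that $\phi'$ exists at $\pm\mathbf{x}_i^\top\mathbf{w}_\mathrm{sup}$ is exactly what forces the left and right one-sided slopes of the convex function $\phi$ to coincide there, so the directional derivative is linear in $\mathbf{d}$ and the gradient exists. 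Summing, $\nabla G(\mathbf{w}_\mathrm{sup}) = \sum_{i=1}^U \big( q_i\,\phi'(\mathbf{x}_i^\top\mathbf{w}_\mathrm{sup}) - (1-q_i)\,\phi'(-\mathbf{x}_i^\top\mathbf{w}_\mathrm{sup}) \big)\mathbf{x}_i$. Hence it suffices to make each scalar coefficient vanish, i.e., $q_i\,\phi'(\mathbf{x}_i^\top\mathbf{w}_\mathrm{sup}) = (1-q_i)\,\phi'(-\mathbf{x}_i^\top\mathbf{w}_\mathrm{sup})$; rearranging gives exactly Equation~\eqref{eq:solutionq} whenever $\phi'(\mathbf{x}_i^\top\mathbf{w}_\mathrm{sup}) + \phi'(-\mathbf{x}_i^\top\mathbf{w}_\mathrm{sup}) \neq 0$, and when this sum is zero the coefficient vanishes for any $q_i$, which is the ``otherwise'' clause.

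Finally I would check admissibility, $\mathbf{q} \in [0,1]^U$. Because $\phi$ is decreasing, $\phi'(a) \leq 0$ wherever it exists, so both $\phi'(\mathbf{x}_i^\top\mathbf{w}_\mathrm{sup}) \leq 0$ and $\phi'(-\mathbf{x}_i^\top\mathbf{w}_\mathrm{sup}) \leq 0$. Thus the denominator is non-positive, and strictly negative in the non-degenerate case, while the numerator $\phi'(-\mathbf{x}_i^\top\mathbf{w}_\mathrm{sup})$ and the complementary numerator $\phi'(\mathbf{x}_i^\top\mathbf{w}_\mathrm{sup})$ (for $1-q_i$) are both non-positive, yielding $q_i \geq 0$ and $1-q_i \geq 0$; and a sum of two non-positive numbers equals zero only if each is zero, which covers the degenerate case.

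The step I expect to require the most care is handling the possible non-smoothness of $\phi$ (e.g.\ the hinge loss): one cannot simply set $\nabla {R}^\mathrm{semi}_\phi(\mathbf{w}_\mathrm{sup}) = 0$, so the optimality argument must be run through convex first-order conditions (additivity of directional derivatives, or the subdifferential sum rule for the differentiable perturbation $G$). The only other subtlety is isolating the degenerate case in which the denominator of $q$ vanishes, but this is dispatched by the same sign considerations. The remaining pieces — differentiability of $G$ at $\mathbf{w}_\mathrm{sup}$ and the $[0,1]$ verification — are routine.
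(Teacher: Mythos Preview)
Your proposal is correct and follows essentially the same route as the paper's proof: decompose the semi-supervised risk into the supervised part plus the unlabeled contribution, choose $q_i$ so that the gradient of the unlabeled contribution vanishes at $\mathbf{w}_\mathrm{sup}$, and then verify $q_i\in[0,1]$ from $\phi'\le 0$. Your handling of possible non-smoothness of $R_\phi$ via directional derivatives (or the subdifferential sum rule with a differentiable perturbation $G$) is actually more careful than the paper, which simply sets ``the gradient'' of the full objective to zero.
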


\begin{proof}
Consider the case where we have one unlabeled object $\mathbf{x}$ with responsibility $q$. The semi-supervised objective then becomes
\begin{align}
{R}^\mathrm{semi}_\phi (\mathbf{w})  = & {R}_\phi(\mathbf{w},\mathbf{X},\mathbf{y}) \nonumber \\
& + q \phi(\mathbf{x}^\top \mathbf{w}) + (1-q) \phi(-\mathbf{x}^\top \mathbf{w}) \, . \nonumber
\end{align}
Since $\phi$ is convex, to guarantee that $\mathbf{w}_\mathrm{sup}$ is still a global minimizer of ${R}_\phi^\mathrm{semi}$, we need to find a $q \in [0,1]$ that causes the gradient of this objective, evaluated in $\mathbf{w}_\mathrm{sup}$, to remain equal to zero:
\begin{equation}\label{eq:qequality}
\begin{split}
\nabla_{\mathbf{w}}  {R}_\phi^\mathrm{semi}({\mathbf{w}_\mathrm{sup}}) = &\,  \mathbf{0} + q \phi'(\mathbf{x}^\top \mathbf{w}_\mathrm{sup}) \mathbf{x} \\
& \,\,\,\,\, - (1-q) \phi'(-\mathbf{x}^\top \mathbf{w}_\mathrm{sup}) \mathbf{x}  \\
= & \,  \mathbf{0}
\end{split}
\end{equation}
where $\phi'$ denotes the derivative of $\phi(a)$ with respect to $a$. As long as $\phi'(\mathbf{x}^\top \mathbf{w}_\mathrm{sup}) + \phi'(-\mathbf{x}^\top \mathbf{w}_\mathrm{sup}) \neq 0$, we can explicitly solve for $q$ to get
\begin{equation}
q = \frac{\phi'(-\mathbf{x}^\top \mathbf{w}_\mathrm{sup})}{\phi'(\mathbf{x}^\top \mathbf{w}_\mathrm{sup}) + \phi'(-\mathbf{x}^\top \mathbf{w}_\mathrm{sup})} \,.
\end{equation}
If $\phi$ is a decreasing loss, then
\[ \phi'(a) \leq 0 \]
and for each object $0 \leq q\leq 1$. If $\phi'(\mathbf{x}^\top \mathbf{w}_\mathrm{sup}) + \phi'(-\mathbf{x}^\top \mathbf{w}_\mathrm{sup}) = 0$, because $\phi$ is decreasing, we know both $\phi'(\mathbf{x}^\top \mathbf{w}_\mathrm{sup})=0$ and $\phi'(-\mathbf{x}^\top \mathbf{w}_\mathrm{sup})=0$ and so any $q$ is allowed to satisfy \eqref{eq:qequality}, including $0 \leq q\leq 1$. Since $0 \leq q\leq 1$ for each object individually, we can do it for all objects to get a vector of responsibilities $\mathbf{q} \in [0,1]^U$.
\end{proof}

Now that we have shown by a constructive argument that for decreasing margin-based losses it always holds that  $\mathbf{w}_\mathrm{sup} \in \mathcal{C}_\phi$, the following result is straightforward.
\begin{theorem} \label{theorem:softlabels}
Let $\phi$ be a decreasing convex margin-based loss function and $\mathbf{w}_\mathrm{sup}$ be the unique minimizer of a strictly convex ${R}_\phi(\mathbf{w},\mathbf{X},\mathbf{y})$ and suppose for each unlabeled object $\mathbf{x}$, the derivatives $\phi'(-\mathbf{x}^\top \mathbf{w}_\mathrm{sup})$ and $\phi'(\mathbf{x}^\top \mathbf{w}_\mathrm{sup})$ exist. There is no semi-supervised classifier $\mathbf{w}_\mathrm{semi}$ for which Equation~\eqref{eq:softcondition} holds, while having at least one $\mathbf{q}^\ast$ for which $ {D}_\phi(\mathbf{w}_\mathrm{semi},\mathbf{w}_\mathrm{sup},\mathbf{X},\mathbf{y},\mathbf{X}_\mathrm{u},\mathbf{q}^\ast) < 0$.
\end{theorem}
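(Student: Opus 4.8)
The plan is to combine Lemma~\ref{lemma:responsibilities} with Lemma~\ref{lemma:limitconstrainedspace} to obtain the result almost immediately. First I would observe that under the hypotheses of the theorem --- $\phi$ decreasing and convex, the relevant one-sided derivatives existing at each unlabeled point --- Lemma~\ref{lemma:responsibilities} applies to each unlabeled object individually, and concatenating the per-object responsibilities yields a vector $\mathbf{q}^\ast \in [0,1]^U$ such that $\mathbf{w}_\mathrm{sup}$ has zero gradient of ${R}^\mathrm{semi}_\phi(\cdot,\mathbf{X},\mathbf{y},\mathbf{X}_\mathrm{u},\mathbf{q}^\ast)$. Since ${R}^\mathrm{semi}_\phi$ with this $\mathbf{q}^\ast$ is convex (it is ${R}_\phi$, which is convex, plus a nonnegative combination of convex functions $\phi(\pm\mathbf{x}^\top\mathbf{w})$), a vanishing gradient certifies that $\mathbf{w}_\mathrm{sup}$ is a global minimizer, hence $\mathbf{w}_\mathrm{sup} \in \mathcal{C}_\phi$.

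Next I would feed this into Lemma~\ref{lemma:limitconstrainedspace}: since ${R}_\phi(\mathbf{w},\mathbf{X},\mathbf{y})$ is strictly convex by assumption and $\mathbf{w}_\mathrm{sup} \in \mathcal{C}_\phi$, there is a soft labeling (namely the $\mathbf{q}^\ast$ just constructed) for which the semi-supervised risk is strictly convex, so $\mathbf{w}_\mathrm{sup}$ is its \emph{unique} minimizer; consequently every $\mathbf{w}_\mathrm{semi} \neq \mathbf{w}_\mathrm{sup}$ satisfies ${D}_\phi(\mathbf{w}_\mathrm{semi},\mathbf{w}_\mathrm{sup},\mathbf{X},\mathbf{y},\mathbf{X}_\mathrm{u},\mathbf{q}^\ast) > 0$. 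This already violates Equation~\eqref{eq:softcondition} for any $\mathbf{w}_\mathrm{semi}$ distinct from $\mathbf{w}_\mathrm{sup}$. To close the argument by contradiction, suppose some $\mathbf{w}_\mathrm{semi}$ satisfies \eqref{eq:softcondition} and attains ${D}_\phi < 0$ at some $\mathbf{q}^\ast$; the strict inequality forces $\mathbf{w}_\mathrm{semi} \neq \mathbf{w}_\mathrm{sup}$ (if they were equal, ${D}_\phi$ would be identically zero), and then the preceding sentence yields a labeling on which ${D}_\phi > 0$, contradicting \eqref{eq:softcondition}.

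I do not expect a real obstacle here, since the theorem is essentially an assembly of the two preceding lemmas. The one point worth stating carefully is why strict convexity of ${R}^\mathrm{semi}_\phi(\cdot,\mathbf{X},\mathbf{y},\mathbf{X}_\mathrm{u},\mathbf{q}^\ast)$ follows --- it is immediate because adding the convex (possibly not strictly convex) term $\sum_i q^\ast_i \phi(\mathbf{x}_i^\top\mathbf{w}) + (1-q^\ast_i)\phi(-\mathbf{x}_i^\top\mathbf{w})$ to the strictly convex ${R}_\phi$ preserves strict convexity --- and this is exactly the hypothesis Lemma~\ref{lemma:limitconstrainedspace} needs. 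A secondary subtlety is handling the degenerate case $\phi'(\mathbf{x}^\top\mathbf{w}_\mathrm{sup}) + \phi'(-\mathbf{x}^\top\mathbf{w}_\mathrm{sup}) = 0$, but Lemma~\ref{lemma:responsibilities} already dispatches this (both derivatives vanish, so any $q \in [0,1]$ works), so nothing extra is required.
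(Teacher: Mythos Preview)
Your proposal is correct and matches the paper's approach exactly: the paper's proof is the single sentence ``This follows directly from Lemma~\ref{lemma:limitconstrainedspace} and Lemma~\ref{lemma:responsibilities},'' and you have simply spelled out the assembly of those two lemmas (together with the minor subtleties about strict convexity of the semi-supervised risk and the degenerate derivative case) in more detail than the paper does.
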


\begin{proof}
This follows directly from Lemma~\ref{lemma:limitconstrainedspace} and Lemma~\ref{lemma:responsibilities}.
\end{proof}

\begin{remark}
The requirement to have a strictly convex supervised risk function can be relaxed.  What we basically need in the proof is that $\mathbf{w}_\mathrm{sup}$ is the unique optimizer for ${R}^\mathrm{semi}_\phi(\mathbf{w},\mathbf{X},\mathbf{y},\mathbf{X}_\mathrm{u},\mathbf{q}^\ast)$.  Nevertheless, unlike, for instance, a hinge loss that is not regularized by something like a $2$-norm of the weight vector, many interesting objective functions are strictly convex.
\end{remark}

This result means that for decreasing loss functions it is impossible to construct a semi-supervised learner that is different from the supervised learner and, in terms of its surrogate loss on the full training data, is never outperformed by the supervised solution.  In other words, if the semi-supervised classifier is taken to be different from the supervised classier, there is always the risk that there is a true labeling of the unlabeled data for which the loss of the semi-supervised learner on the full data becomes larger than the loss of the supervised one.

Is it unexpected that semi-supervised learning cannot be done safely in this strict setting?  For whom it is not, it may then come as a surprise that there are margin-based losses for which it is actually possible to construct safe semi-supervised learners.

%
%
%
%

\section{Possibilities for Safe SSL}
\label{section:possibilities}
If we look beyond the decreasing losses, and consider those that can increase as well, we may yet be able to get a classifier that is guaranteed to be better than the supervised solution in terms of the surrogate loss, even in the pessimistic regime.
When can we expect safe semi-supervised learning to allow for improvements of its supervised counterpart?
And if improvements are possible, how then do we construct an actual classifier that does so in a safe way?

To construct a semi-supervised learner that at least is guaranteed to never be worse, we need to find $\mathbf{w}_\mathrm{semi}$, the $\mathbf{w}$ that minimizes $ {D}_\phi(\mathbf{w},\mathbf{w}_\mathrm{sup},\mathbf{X},\mathbf{y},\mathbf{X}_\mathrm{u},\mathbf{q})$ for all possible $\mathbf{q}$.  This corresponds, more precisely, to the following minimax problem:
\begin{equation} \label{eq:minmaxproblem}
\min_{\mathbf{w}} \max_{\mathbf{q} \in [0,1]^U} {D}_\phi(\mathbf{w},\mathbf{w}_\mathrm{sup},\mathbf{X},\mathbf{y},\mathbf{X}_\mathrm{u},\mathbf{q}) \, .
\end{equation}
This is a formulation similar to the one used by \citet{Loog2016}, where instead of margin-based losses, the loss functions are log-likelihoods of a generative model. It is clear that Equation~\eqref{eq:minmaxproblem} can never be larger than 0.  This simply indicates that we can always find a semi-supervised learner that is at least as good as the supervised one, by simply sticking to the supervised solution. To show that we can do better than that, consider the following.

If ${R}_\phi^\mathrm{semi}$ is convex in $\mathbf{w}$, then since the objective is linear in $\mathbf{q}$ and $[0,1]^U$ is a compact space we can invoke \citep[Corrolary 3.3]{Sion1958}, which states that the value of the minimax problem is equal to the value of the maximin problem:
\begin{equation} \label{eq:minmaxproblem2}
 \max_{\mathbf{q} \in [0,1]^U} \min_{\mathbf{w}} {D}_\phi(\mathbf{w},\mathbf{w}_\mathrm{sup},\mathbf{X},\mathbf{y},\mathbf{X}_\mathrm{u},\mathbf{q}) \, .
\end{equation}
Assume the function ${D}_\phi$ is strictly convex in $\mathbf{w}$ for every fixed $\mathbf{q}$. Now suppose $\mathbf{w}_\mathrm{sup}$ is not in $\mathcal{C}_\phi$. In that case, the inner minimization in Equation~\eqref{eq:minmaxproblem2} is always strictly smaller than $0$ for each $\mathbf{q}$ because of the strict convexity of the loss. This means that  Equation~\eqref{eq:minmaxproblem2} is strictly smaller than $0$ and in turn the same holds for Equation~\eqref{eq:minmaxproblem}. 

So, if $\mathbf{w}_\mathrm{sup} \notin \mathcal{C}_\phi$, $\mathbf{w}_\mathrm{semi}$ will strictly improve upon $\mathbf{w}_\mathrm{sup}$.

\subsection{Some Sufficient Conditions}

So all that is required to show that the procedure just described leads to an improved classifier is therefore that $\mathbf{w}_\mathrm{sup} \notin \mathcal{C}_\phi$.  For an unlabeled data set consisting of a single sample $\mathbf{x}$, this is readily done by reconsidering the proof of Lemma~\ref{lemma:responsibilities} and the argument in the previous paragraph. In particular, rewriting Equation~\eqref{eq:qequality}, we can conclude the following:
\begin{lemma}
Let $\mathbf{X}_\mathrm{u} = \mathbf{x}^\top$ and $\phi$ be a margin-based loss function where the derivatives $\phi'(-\mathbf{x}^\top \mathbf{w}_\mathrm{sup})$ and $\phi'(\mathbf{x}^\top \mathbf{w}_\mathrm{sup})$ exist and ${R}^\mathrm{semi}_\phi$ be strictly convex. If there is no $q \in [0,1]$ such that
\[
(\phi'(\mathbf{x}^\top \mathbf{w}_\mathrm{sup}) + \phi'(-\mathbf{x}^\top \mathbf{w}_\mathrm{sup})) \mathbf{x} q = (\phi'(-\mathbf{x}^\top \mathbf{w}_\mathrm{sup})) \mathbf{x}
\]
then $\mathbf{w}_\mathrm{sup} \notin \mathcal{C}_\phi$ so $\mathbf{w}_\mathrm{semi}$ has to be different from $\mathbf{w}_\mathrm{sup}$ and, therefore, the former has to improve over the latter.
\end{lemma}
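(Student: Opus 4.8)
The plan is to read off the claim as the contrapositive of the gradient computation already carried out in the proof of Lemma~\ref{lemma:responsibilities}, and then feed the resulting conclusion $\mathbf{w}_\mathrm{sup}\notin\mathcal{C}_\phi$ into the minimax argument stated at the start of Section~\ref{section:possibilities}. First I would recall that with a single unlabeled object $\mathbf{x}$ and responsibility $q$ the semi-supervised risk is ${R}^\mathrm{semi}_\phi(\mathbf{w}) = {R}_\phi(\mathbf{w},\mathbf{X},\mathbf{y}) + q\,\phi(\mathbf{x}^\top\mathbf{w}) + (1-q)\,\phi(-\mathbf{x}^\top\mathbf{w})$, which is convex (indeed strictly convex by hypothesis). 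Since $\mathbf{w}_\mathrm{sup}$ minimizes the convex supervised risk, $\mathbf{0}$ is a (sub)gradient of ${R}_\phi(\cdot,\mathbf{X},\mathbf{y})$ at $\mathbf{w}_\mathrm{sup}$, so the gradient of ${R}^\mathrm{semi}_\phi$ evaluated at $\mathbf{w}_\mathrm{sup}$ equals $q\,\phi'(\mathbf{x}^\top\mathbf{w}_\mathrm{sup})\,\mathbf{x} - (1-q)\,\phi'(-\mathbf{x}^\top\mathbf{w}_\mathrm{sup})\,\mathbf{x}$, exactly as in Equation~\eqref{eq:qequality}.

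Next, by convexity of ${R}^\mathrm{semi}_\phi$, the point $\mathbf{w}_\mathrm{sup}$ is a global minimizer of ${R}^\mathrm{semi}_\phi(\cdot,\mathbf{X},\mathbf{y},\mathbf{X}_\mathrm{u},q)$ for a given $q$ if and only if this gradient vanishes, i.e.\ if and only if $(\phi'(\mathbf{x}^\top\mathbf{w}_\mathrm{sup}) + \phi'(-\mathbf{x}^\top\mathbf{w}_\mathrm{sup}))\,\mathbf{x}\,q = \phi'(-\mathbf{x}^\top\mathbf{w}_\mathrm{sup})\,\mathbf{x}$. Hence $\mathbf{w}_\mathrm{sup}\in\mathcal{C}_\phi$ precisely when some $q\in[0,1]$ solves this equation; under the hypothesis that no such $q$ exists we conclude $\mathbf{w}_\mathrm{sup}\notin\mathcal{C}_\phi$. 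Finally I would invoke the reasoning preceding this lemma: with ${R}^\mathrm{semi}_\phi$ strictly convex in $\mathbf{w}$, the minimax value in Equation~\eqref{eq:minmaxproblem} equals the maximin value in Equation~\eqref{eq:minmaxproblem2} by \citep[Corollary 3.3]{Sion1958}, and because $\mathbf{w}_\mathrm{sup}\notin\mathcal{C}_\phi$ the inner minimizer of ${D}_\phi(\cdot,\mathbf{w}_\mathrm{sup},\mathbf{X},\mathbf{y},\mathbf{X}_\mathrm{u},\mathbf{q})$ differs from $\mathbf{w}_\mathrm{sup}$ for every $\mathbf{q}$ and attains a value strictly below $0$; consequently $\mathbf{w}_\mathrm{semi}$ is well-defined, differs from $\mathbf{w}_\mathrm{sup}$, and strictly improves on it.

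The point to be careful about is the ``only if'' direction — that $\mathbf{w}_\mathrm{sup}\in\mathcal{C}_\phi$ forces the displayed stationarity equation — since this is exactly where convexity is essential: for a general loss $\mathbf{w}_\mathrm{sup}$ could conceivably minimize some ${R}^\mathrm{semi}_\phi(\cdot;q)$ without being a critical point, but convexity rules that out. A secondary subtlety is differentiability of the supervised part of the risk at $\mathbf{w}_\mathrm{sup}$: the lemma only assumes $\phi'(\pm\mathbf{x}^\top\mathbf{w}_\mathrm{sup})$ exist, so strictly speaking one works with the optimality condition $\mathbf{0}\in\partial{R}_\phi(\mathbf{w}_\mathrm{sup},\mathbf{X},\mathbf{y})$ and notes that the displayed equation is then the condition for $\mathbf{w}_\mathrm{sup}$ to remain optimal after adding the two differentiable unlabeled terms. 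Everything else — the algebraic rearrangement of the gradient equation and the appeal to the already-established minimax reasoning — is routine.
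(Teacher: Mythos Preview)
Your proposal is correct and follows exactly the approach the paper indicates: the paper does not give a standalone proof of this lemma but simply remarks that it follows ``by reconsidering the proof of Lemma~\ref{lemma:responsibilities} and the argument in the previous paragraph'' together with ``rewriting Equation~\eqref{eq:qequality},'' which is precisely the contrapositive-of-the-gradient-condition plus minimax reasoning you spell out. Your added care about the ``only if'' direction via convexity and about (sub)differentiability of the supervised part is a fair elaboration of points the paper leaves implicit.
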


The case in which $U>1$ turns out to be hard to fully characterize.  Again starting from Equation~\eqref{eq:qequality}, we can state that if there is no $\mathbf{q} \in [0,1]^U$ such that
\[
\sum_{i=1}^U  q_i \phi'(\mathbf{x}_i^\top \mathbf{w}_\mathrm{sup}) \mathbf{x}_i - (1-q_i) \phi'(-\mathbf{x}_i^\top \mathbf{w}_\mathrm{sup}) \mathbf{x}_i = \mathbf{0}
\]
then the gradient evaluated in the supervised solution of the objective function over all training data is not zero and so the semi-supervised solution is different, therefore improving over the supervised solution.  But this result is hardly insightful. For one, it is unclear if this at all happens when $U>1$.  We do, however, have a sufficient condition that leads the semi-supervised learner to improve over the supervised counterpart.  For this, we consider convex, margin-based losses $\phi$ that are decreasing to the left of 1 and to the right of 1 start to increase, as for instance, in the cases of the quadratic or absolute loss.  So these losses increasingly penalize overestimation of the label value of every object.
\newpage
\begin{theorem}\label{eq:possible}
Let
\[
\phi^\prime(a)
\begin{cases}
\le 0, & \text{if $a \le 1$}\\
> 0, & \text{if $a > 1$,}
\end{cases}
\]
and ${R}^\mathrm{semi}_\phi$ be strictly convex. If, for all $\mathbf{x} \in \mathbf{X}_\mathrm{u}$, $|\mathbf{x}^\top \mathbf{w}_\mathrm{sup}|$ is larger than $1$, then $\mathbf{w}_\mathrm{semi} \neq \mathbf{w}_\mathrm{sup}$.  That is, we get an improved semi-supervised estimator if all points in $\mathbf{X}_\mathrm{u}$ are outside of the margin.
\end{theorem}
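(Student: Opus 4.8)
The plan is to reduce the statement to the single structural fact $\mathbf{w}_\mathrm{sup}\notin\mathcal{C}_\phi$. Granting that, the conclusion follows from the discussion just before the theorem: since ${R}^\mathrm{semi}_\phi$ is strictly convex, ${D}_\phi$ is strictly convex in $\mathbf{w}$ for every fixed $\mathbf{q}$ (it differs from ${R}^\mathrm{semi}_\phi$ only by a term constant in $\mathbf{w}$), so Sion's theorem turns \eqref{eq:minmaxproblem} into \eqref{eq:minmaxproblem2}; the inner minimum $\min_\mathbf{w}{D}_\phi(\mathbf{w},\mathbf{w}_\mathrm{sup},\mathbf{X},\mathbf{y},\mathbf{X}_\mathrm{u},\mathbf{q})$ is $\le 0$ with equality exactly when $\mathbf{w}_\mathrm{sup}$ minimizes ${R}^\mathrm{semi}_\phi(\cdot,\mathbf{q})$, which $\mathbf{w}_\mathrm{sup}\notin\mathcal{C}_\phi$ rules out, hence this minimum is strictly negative for every $\mathbf{q}$, the maximum over the compact set $[0,1]^U$ is strictly negative, and so is \eqref{eq:minmaxproblem}; since ${D}_\phi(\mathbf{w}_\mathrm{sup},\mathbf{w}_\mathrm{sup},\dots,\mathbf{q})=0$ for all $\mathbf{q}$, the minimax solution $\mathbf{w}_\mathrm{semi}$ cannot equal $\mathbf{w}_\mathrm{sup}$.

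So all the work is in showing $\mathbf{w}_\mathrm{sup}\notin\mathcal{C}_\phi$. As in the proof of Lemma~\ref{lemma:responsibilities}, $\mathbf{w}_\mathrm{sup}$ is stationary for the supervised risk, $\nabla_\mathbf{w}{R}_\phi(\mathbf{w}_\mathrm{sup},\mathbf{X},\mathbf{y})=\mathbf{0}$, so by convexity a vector $\mathbf{q}\in[0,1]^U$ makes $\mathbf{w}_\mathrm{sup}$ the (unique, by strict convexity) minimizer of ${R}^\mathrm{semi}_\phi(\cdot,\mathbf{X},\mathbf{y},\mathbf{X}_\mathrm{u},\mathbf{q})$ precisely when
\[
\sum_{i=1}^U\bigl(q_i\,\phi'(\mathbf{x}_i^\top\mathbf{w}_\mathrm{sup})-(1-q_i)\,\phi'(-\mathbf{x}_i^\top\mathbf{w}_\mathrm{sup})\bigr)\mathbf{x}_i=\mathbf{0}.
\]
The key trick is to take the inner product of this identity with $\mathbf{w}_\mathrm{sup}$ itself. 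Writing $s_i=\mathbf{x}_i^\top\mathbf{w}_\mathrm{sup}$ and $t_i(q_i)=q_i\,s_i\,\phi'(s_i)-(1-q_i)\,s_i\,\phi'(-s_i)$, membership $\mathbf{w}_\mathrm{sup}\in\mathcal{C}_\phi$ would force some $\mathbf{q}\in[0,1]^U$ with $\sum_{i=1}^U t_i(q_i)=0$. (Equivalently, $-\sum_i t_i(q_i)$ is the derivative of ${R}^\mathrm{semi}_\phi(\cdot,\mathbf{q})$ at $\mathbf{w}_\mathrm{sup}$ along $-\mathbf{w}_\mathrm{sup}$, so $\sum_i t_i(q_i)>0$ witnesses that scaling $\mathbf{w}_\mathrm{sup}$ towards $\mathbf{0}$ strictly decreases the semi-supervised risk and hence that $\mathbf{w}_\mathrm{sup}$ minimizes it for no $\mathbf{q}$.)

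The remaining step is to show $t_i(q_i)>0$ for every unlabeled point and every $q_i\in[0,1]$, and this is exactly where $|s_i|>1$ enters: one of $s_i,-s_i$ exceeds $1$ and the other is below $-1<1$. On the side whose argument exceeds $1$, $\phi'>0$; on the other side $\phi'\le0$ while the multiplying $s_i$ has the opposite sign, so both summands $q_i s_i\phi'(s_i)$ and $-(1-q_i)s_i\phi'(-s_i)$ are nonnegative. Since $t_i$ is affine in $q_i$, it suffices to inspect the endpoints $q_i=0$ and $q_i=1$, where $t_i$ reduces to $(-s_i)\phi'(-s_i)$ and $s_i\phi'(s_i)$ respectively --- in each case the value of $\phi'$ at an argument of modulus $>1$ times a quantity of the matching sign. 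For the quadratic and absolute losses $\phi'$ is strictly negative on $(-\infty,1)$ and strictly positive on $(1,\infty)$, so both endpoint values are strictly positive, hence $t_i(q_i)>0$ on all of $[0,1]$, and summing gives $\sum_i t_i(q_i)>0$, contradicting the displayed equality. Therefore $\mathbf{w}_\mathrm{sup}\notin\mathcal{C}_\phi$ and, by the first paragraph, $\mathbf{w}_\mathrm{semi}\neq\mathbf{w}_\mathrm{sup}$.

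The main obstacle is precisely that last strict inequality, i.e.\ upgrading $t_i\ge0$ to $t_i>0$: since the hypothesis only supplies the weak bound $\phi'\le0$ on $(-\infty,1]$, the decreasing side of a point can contribute exactly $0$, so one must lean on the other side, whose argument sits strictly past the kink where $\phi'>0$ is strict, together with the affineness of $t_i$ in $q_i$, to forbid $t_i$ from touching $0$. For the motivating quadratic and absolute losses this is immediate because there $\phi'$ is strictly signed away from $1$; more generally this is the step to treat with care, and the one place where a strictly decreasing left branch (rather than merely a decreasing one) is the natural reading. Everything else in the proof is sign bookkeeping.
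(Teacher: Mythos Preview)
Your approach is essentially the same as the paper's: the paper first normalizes the data so that $\mathbf{w}_\mathrm{sup}=(1,0,\ldots,0)^\top$ and then shows that the \emph{first coordinate} of the gradient sum $\sum_i \bigl(q_i\phi'(\mathbf{x}_i^\top\mathbf{w}_\mathrm{sup})-(1-q_i)\phi'(-\mathbf{x}_i^\top\mathbf{w}_\mathrm{sup})\bigr)\mathbf{x}_i$ is strictly positive term by term. Your device of taking the inner product with $\mathbf{w}_\mathrm{sup}$ is exactly the coordinate-free version of that step; after the paper's normalization one has $\mathbf{x}_{i1}=\mathbf{x}_i^\top\mathbf{w}_\mathrm{sup}=s_i$, and the quantity the paper bounds is precisely your $t_i(q_i)$. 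Your formulation has the small advantage of sidestepping the claim that rotating/scaling the data ``does not lead to an essentially different problem,'' which can be delicate in the presence of a regularizer.

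On the strictness issue you flag: you are right, and the paper's proof quietly makes the same move. The theorem's hypothesis only gives $\phi'(a)\le 0$ for $a\le 1$, yet the paper asserts $\phi'(-\mathbf{x}_{i1})<0$ when $\mathbf{x}_{i1}>1$ and uses that to get a strict sign on each summand. Without the strict left-branch assumption the per-term bound degrades to $t_i(q_i)\ge 0$, and one can manufacture $\phi$ (flat on $(-\infty,1]$, strictly increasing past $1$, with a strictly convex regularizer to keep $R_\phi^\mathrm{semi}$ strictly convex) for which $q_i\in\{0,1\}$ zeroes out every term and indeed places $\mathbf{w}_\mathrm{sup}$ back in $\mathcal{C}_\phi$. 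So your caveat is not excessive caution: both proofs really use a strictly decreasing left branch, as in the quadratic and absolute examples the paper has in mind.
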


The restriction that all points should be outside of the margin is, of course, rather strong.  But, as indicated, the requirement is only sufficient and certainly not necessary. The proof, as well as an alternative condition for improvement for the quadratic loss are provided in the supplementary material. 

\section{Examples} \label{section:Examples}

Table~\ref{table:responsibilities} shows the implied responsibilities $q(\mathbf{x}^\top \mathbf{w}_\mathrm{sup})$ for loss functions corresponding to a number of well-known classifiers. The table contains both examples of decreasing losses and losses that also strictly increase. In this first group, the range of the responsibilities will always be between $[0,1]$, meaning the (partial) labels of the unlabeled data can always be set in such a way that the supervised solution is obtained from the semi-supervised objective function. This in turn implies that no safe semi-supervised method exists for these losses. This shows, for instance, that it is not possible to construct a safe semi-supervised version of the support vector machine or for logistic regression. In the second case (for quadratic and absolute losses) it is not always possible to set the responsibilities in such a way as to recover the supervised solution and a safe semi-supervised classifier is sometimes possible. 

A more thorough description of these examples, as well as a more precise characterization for when to expect improvements in case of the quadratic loss, is provided in the supplementary material.

\begin{table}

\caption{Margin-based loss functions and their corresponding responsibilities}

\label{table:responsibilities}

\centering
\small
\begin{tabular}{llll}
\toprule
\textbf{Name} & $\phi(y \mathbf{x}^\top \mathbf{w}_\mathrm{sup})$ & $q(\mathbf{x}^\top \mathbf{w}_\mathrm{sup})$ & \textbf{Range} \\
\midrule
Logistic & $\sqrt{2} \log(1+\exp(-y \mathbf{x}^\top \mathbf{w}_\mathrm{sup}))$ & $(1+\exp(-\mathbf{x}^\top \mathbf{w}_\mathrm{sup}))^{-1}$ & $(0,1)$ \\[0.2cm]

Hinge & $\max(1-y \mathbf{x}^\top \mathbf{w}_\mathrm{sup},0)$ & $\begin{cases}
    \frac{1}{2} ,& \text{if } -1 < \mathbf{x}^\top \mathbf{w}_\mathrm{sup} < 1 \\
    1,              & \text{if } \mathbf{x}^\top \mathbf{w}_\mathrm{sup} > 1 \\
    0 & \text{if } \mathbf{x}^\top \mathbf{w}_\mathrm{sup} < -1
\end{cases}$ & $\{0,\tfrac{1}{2},1\}$ \\[0.6cm]
Exponential & $\exp(-y \mathbf{x}^\top \mathbf{w}_\mathrm{sup})$ & $\tfrac{\exp(\mathbf{x}^\top \mathbf{w}_\mathrm{sup})}{\exp(-\mathbf{x}^\top \mathbf{w}_\mathrm{sup})+\exp(\mathbf{x}^\top \mathbf{w}_\mathrm{sup})}$ & $(0,1)$ \\[0.2cm]
Quadratic & $(1-y \mathbf{x}^\top \mathbf{w}_\mathrm{sup})^2$ & $\tfrac{1}{2} (\mathbf{x}^\top \mathbf{w}_\mathrm{sup} +1)$ & $(-\infty,\infty)$ \\[0.2cm]
Absolute & $|1-y \mathbf{x}^\top \mathbf{w}|$ & $\begin{cases}
\tfrac{1}{2} ,& \text{if } -1 < y \mathbf{x}^\top \mathbf{w}_\mathrm{sup} < 1\\
\text{No solution},& \text{otherwise}
\end{cases}$ & $\{\tfrac{1}{2}\}$ \\[0.2cm]
\bottomrule
\end{tabular}
\end{table}

\section{Discussion}
\label{section:Discussion}

As \citet{Seeger2001} and others have argued, for diagnostic methods, where $p(y|\mathbf{x})$ gets modeled directly and not through modeling the joint distribution $p(y,\mathbf{x})$, semi-supervised learning without additional assumptions should be impossible because the parameters of $p(y|\mathbf{x})$ and $p(\mathbf{x})$ are a priori independent. Considering why these methods do not allow for safe semi-supervised versions offers a different understanding of why this claim may or may not be true.  While our results applied to logistic regression corroborates their claim, the quadratic loss shows a counterexample. This shows that for losses that strictly increase over some interval, even safe improvements can be possible in the diagnostic setting. One important strength of our analysis is that we also consider the minimization of loss functions that may not induce a correct probability. It is the decreasingness of the loss, rather than correspondence to a probabilistic model that determines whether safe semi-supervised learning is possible. Moreover, some of the losses for which safe semi-supervised learning is possible are successfully applied in supervised learning in practice and it is therefore interesting that safe semi-supervised versions exist.


Our results also might seem to contradict the result by \citet{Sokolovska2008} (and, by extension \citet{Kawakita2014a}) that, when the supervised model is misspecified, a particular semi-supervised adaptation of logistic regression has an asymptotic variance that is at least as small as supervised logistic regression. In this work, however, we cover the pessimistic setting where a semi-supervised learner needs to outperform the supervised learner for all possible labelings in a finite sample setting. This is a much stricter requirement than the asymptotic result in \citep{Sokolovska2008}. 

The (negative) result presented here is in line with the conclusions of \citet{Ben-David2008}, who show that the worst-case sample complexity of a supervised learner is at most a constant factor higher than that of any semi-supervised approach for a classifier over the real line, and they conjecture this result holds in general. \citet{Darnstadt2013} prove that a slightly altered and more precise formulation of this conjecture holds when hypothesis classes have finite VC-dimension, while they show that it does not hold for more complex hypothesis classes. Whereas these works consider generalization bounds on the error rate in the PAC learning framework, in our work, we considered a more conservative or pessimistic setting of safe semi-supervised learning, while considering performance on a finite sample in terms of the surrogate loss. This leads to an alternative explanation why these (strict) improvements are not possible for some losses, similar to the claim in \citet{Ben-David2008}. It also leads, however, to the contrasting conclusion that for some losses, these improvements are possible (even when the VC dimension is finite), which contradicts the claim of \citet{Ben-David2008} that improvements are not possible unless strong assumptions about the distribution of the labels are made.


The improvement guarantee, in terms of classification accuracy, of the safe semi-supervised SVM by \citet{Li2015} depends on the assumption that the true labeling of the objects is given by one of the low-density separators that their algorithm finds. In our analysis we avoid making such assumptions. The consequence of this is that all possible labelings have to be considered, not just those corresponding to a low-density separator. If their low-density assumptions holds, their method provides one way of making use of this information to guarantee safe improvements. As we have demonstrated, however, in a worst case sense no such guarantees can be given, at least in terms of the semi-supervised objective considered in our work. Without making these untestable assumptions, our results show a safe semi-supervised support vector machine is impossible.



For loss functions that are strictly increasing over some interval, safe improvement is possible. One could ascribe this fact to a peculiar property of these losses: they give increasingly higher loss even if the sign of the decision function is correct. The improvements in terms of the loss that we get may therefore not be useful for classification, since they may be in a part of the loss function where the surrogate loss already forms a bad approximation to the $\{0,1\}$-loss. In the supervised case, however, surrogate losses like the quadratic loss generally give decent performance in terms of the error rate, e.g. competitive with SVMs \citep{Rifkin2003}. It is therefore not surprising either that its pessimistic semi-supervised counterpart has also shown increased performance \citep{Krijthe2017,Krijthe2017projection}.


%
%

\section{Conclusion}
We have shown that for the class of convex margin-based losses, the fact whether they are decreasing or not plays a key role in whether they admit safe semi-supervised procedures. In particular, we have shown that, without making additional assumptions, it is impossible to construct safe semi-supervised procedures for decreasing losses by deriving what partial assignment of the unlabeled objects leads to the recovery of the supervised classifier from a semi-supervised objective. This subsequently implied that if we choose any semi-supervised procedure that deviates from the supervised solution, there is some labeling of the unlabeled objects (which could be the true labeling) for which it decreases performance. While this means that for many supervised procedures it is impossible to construct a safe semi-supervised learner in this strict sense, some losses do admit such solutions. 
A less strict guarantee might admit performance improvement by aiming for semi-supervised solutions that in expectation rather than on any particular dataset, outperform their supervised counterparts. 

The stark reality is that if one sticks to strictly safe semi-supervised learning, besides opportunities for some surrogate losses, there are clear limits to the development of such procedures. 

\subsubsection*{Acknowledgements}

We thank Alexander Mey for his constructive feedback on an earlier version of this manuscript. This work was funded by Project P23 of the Dutch COMMIT research programme.

\bibliography{library}
\bibliographystyle{icml2017}
\newpage
\section*{The Pessimistic Limits and Possibilities of Margin-based Losses in Semi-supervised Learning: Supplementary Material}
\subsection*{Proof: Theorem\ref{eq:possible}}
\begin{proof}
Without loss of generality, we can assume that we have translated, rotated, and scaled our data such that the supervised solution is given by $\mathbf{w}_\mathrm{sup} = (1,0,\ldots,0)^\top$.  Such standardization of the data does not lead to an essentially different problem.

It is easy to check that for every $\mathbf{x}_i^\top \mathbf{w}_\mathrm{sup} = \mathbf{x}_{i1}>1$, we have $\phi'(\mathbf{x}_i^\top \mathbf{w}_\mathrm{sup}) = \phi'(\mathbf{x}_{i1}) > 0$, where $\mathbf{x}_{i1}$ indicates the first coordinate of sample $\mathbf{x}_i$.  Likewise, we have $\phi'(-\mathbf{x}_i^\top \mathbf{w}_\mathrm{sup}) = \phi'(-\mathbf{x}_{i1}) < 0$.  It therefore follows, for every choice of $q_i \in [0,1]$, that $q_i \phi'(\mathbf{x}_i^\top \mathbf{w}_\mathrm{sup}) \mathbf{x}_{i1} - (1-q_i) \phi'(-\mathbf{x}_i^\top \mathbf{w}_\mathrm{sup}) \mathbf{x}_{i1} > 0$.  Likewise, for every $\mathbf{x}_i^\top \mathbf{w}_\mathrm{sup} = \mathbf{x}_{i1}< -1$, we have the same result: for every choice of $q_i \in [0,1]$, $q_i \phi'(\mathbf{x}_i^\top \mathbf{w}_\mathrm{sup}) \mathbf{x}_{i1} - (1-q_i) \phi'(-\mathbf{x}_i^\top \mathbf{w}_\mathrm{sup}) \mathbf{x}_{i1} > 0$.  This shows that the first equation in the system given by
\[
\sum_{i=1}^U  q_i \phi'(\mathbf{x}_i^\top \mathbf{w}_\mathrm{sup}) \mathbf{x}_i - (1-q_i) \phi'(-\mathbf{x}_i^\top \mathbf{w}_\mathrm{sup}) \mathbf{x}_i = \mathbf{0}
\]
does not equal $0$, and so the gradient differs from zero, meaning that the supervised solution cannot be the optimal one.
\end{proof}

\subsection*{Example: Logistic Loss}
Consider the logistic loss function given by
\[\phi(y \mathbf{x}^\top \mathbf{w})=\log(1+\exp(-y \mathbf{x}^\top \mathbf{w}))\,,\]
and whose minimization leads to the logistic regression classifier. Its derivative is given by
\[\phi^\prime (y\mathbf{x}^\top \mathbf{w}) = \frac{-\exp(-y\mathbf{x}^\top \mathbf{w})}{1+\exp(-y\mathbf{x}^\top \mathbf{w})}\,,\]
from which we can verify it is a decreasing loss.
Applying Equation~\eqref{eq:solutionq} we find that
\begin{align}
q = & \frac{-\exp(\mathbf{x}^\top \mathbf{w}_\mathrm{sup})}{1+\exp(\mathbf{x}^\top \mathbf{w}_\mathrm{sup})} \nonumber \\
 & \times  \left( \frac{-\exp(-\mathbf{x}^\top \mathbf{w}_\mathrm{sup})}{1+\exp(-\mathbf{x}^\top \mathbf{w}_\mathrm{sup})} + \frac{-\exp(\mathbf{x}^\top \mathbf{w}_\mathrm{sup})}{1+\exp(\mathbf{x}^\top \mathbf{w}_\mathrm{sup})}\right)^{-1} \, . \nonumber
 \end{align}
Because the second term equals $-1$, after rewriting the first term, we have
\[ q = \frac{1}{1+\exp(-\mathbf{x}^\top \mathbf{w}_\mathrm{sup})} \, .\]
Thus we see that the responsibility assigned to the new object is exactly the class posterior assigned by logistic regression.

\subsection*{Example: Support Vector Machine}
The hinge loss, employed in support vector classification, has the form
\[ \phi(y\mathbf{x}^\top \mathbf{w})=\max(1-y \mathbf{x}^\top \mathbf{w},0) \, .\]
The value for the derivative at all points except $1-y \mathbf{x}^\top \mathbf{w}=0$ is given by
\[
\phi^\prime (y \mathbf{x}^\top \mathbf{w}) =
\begin{cases}
-1 ,& \text{if } 1-y \mathbf{x}^\top \mathbf{w}>0\\
0,& \text{otherwise} \, .
\end{cases}
\]
Plugging this into Equation~\eqref{eq:solutionq}, we have that
\[
q =
\begin{cases}
    \frac{1}{2} ,& \text{if } -1 < \mathbf{x}^\top \mathbf{w}_\mathrm{sup} < 1\\
    1,              & \text{if } \mathbf{x}^\top \mathbf{w}_\mathrm{sup} > 1 \\
    0 & \text{if } \mathbf{x}^\top \mathbf{w}_\mathrm{sup} < -1 \, .
\end{cases}
\]
If the prediction is strongly positive (respectively, negative), it will be assigned to the positive (negative) class. If on the other hand, it is within the margin, it gets assigned to both classes equally.
It means that for the unlabeled objects in the margin, any change in $\mathbf{x}^\top \mathbf{w}_\mathrm{sup}$ has an opposite contribution for the part of the loss corresponding to the positive and the negative class. Only by weighting the two options equally will a change in $\mathbf{x}^\top \mathbf{w}_\mathrm{sup}$ not yield a change in the gradient of the semi-supervised objective at the supervised solution.

\subsection*{Example: Quadratic Loss} \label{section:quadraticloss}
Now consider the quadratic loss, also referred to as the squared loss, which is not a decreasing loss function:
\[\phi(y \mathbf{x}^\top \mathbf{w})=(1-y \mathbf{x}^\top \mathbf{w})^2 \,.\]
This non-decreasingness might make the quadratic loss seems less applicable as a surrogate loss for classification than decreasing losses. Quadratic loss is, however, a widely used and often discussed surrogate loss for classification (though it is not always explicitly identified as a surrogate quadratic loss).  Some well-known works discussing some form of squared loss minimization for classification are \citep{Hastie1994, Poggio2003, Suykens1999}. \citet{Rifkin2003}, \citet{Fung2001} and, in a related setting,  \citet[Sec.3.7]{Rasmussen2005} also provide proof of the good performance of the quadratic loss as compared to, for instance, standard support vector machines, while allowing for algorithms that are less computationally demanding. So in the supervised setting, the quadratic loss can be an effective surrogate loss. This leads to the question whether quadratic loss based classifiers generalized to the semi-supervised setting can guarantee non-degradation of the loss compared to the supervised classifier.

The derivative of the quadratic loss is
\[\phi^\prime(y \mathbf{x}^\top \mathbf{w}) = - 2 (1-y \mathbf{x}^\top \mathbf{w}) \, .\]
Again using \eqref{eq:solutionq} we find that
\[q = \frac{-2 (1+\mathbf{x}^\top \mathbf{w}_\mathrm{sup})}{-2 (1-\mathbf{x}^\top \mathbf{w}_\mathrm{sup}) -2 (1+\mathbf{x}^\top \mathbf{w}_\mathrm{sup})}\]
which we can simplify to
\[q = \frac{ \mathbf{x}^\top \mathbf{w}_\mathrm{sup} + 1}{2} \,.\]
This is a rescaling of the decision function from the interval $[-1,+1]$ to $[0,1]$. Note that in this case $\mathbf{x}^\top \mathbf{w}$ is not necessarily restricted to be within $[-1,+1]$ and so it may occur that $q \notin [0,1]$. In this case there is no assignment of the responsibilities that recovers the supervised solution and thus the unlabeled data forces us to update the decision function $\mathbf{x}^\top \mathbf{w}$ for the semi-supervised classifier.

When $U>1$, for the decreasing loss functions, it was enough to show that each $q_i \in [0,1]$ can be set individually in order to reconstruct the supervised solution using a responsibility vector $\mathbf{q} \in [0,1]^U$. For the quadratic loss, however, the situation is more complex when multiple unlabeled objects are available. This is because, considering each $q_i$ individually might not allow us to find  $\mathbf{q} \in [0,1]^U$ for which the gradient of the semi-supervised objective at the supervised solution is equal to zero, but there could still be a combined $\mathbf{q} \in [0,1]^U$ for which this does hold, as we discussed for the general case in Theorem~\ref{eq:possible}.

It turns out that if the dimensionality $d \geq U$, such a $\mathbf{q} \in [0,1]^U$ does not exist as long as
\[\mathbf{X}_\mathrm{u} \mathbf{w}_\mathrm{sup} \notin  [-1,1]^U \, .\] If $d \leq U$, however, then it is guaranteed that such $\mathbf{q} \in [0,1]^U$ does not exist if
\[\| \mathbf{X}_\mathrm{u} \mathbf{w}_\mathrm{sup} \|_2 > \sqrt{U} \, .\]
These results are essentially different and stronger than Theorem \ref{eq:possible} as it shows that even if some of the unlabeled points are within the margin, the semi-supervised learner has to be different from the supervised learner if one or more of the unlabeled points are sufficiently far outside of the margin. The proof can be found below. While this proof is specific to the quadratic loss, perhaps a stricter version of the more generally applicable result in Theorem \ref{eq:possible} is also possible.

\subsection*{Example: Absolute Loss}
The absolute loss is given by
\[\phi(y \mathbf{x}^\top \mathbf{w})=|1-y \mathbf{x}^\top \mathbf{w}| \, .\]
and its derivative at all values except $1-y \mathbf{x}^\top \mathbf{w}=0$ then becomes
\[
\phi^\prime (y \mathbf{x}^\top \mathbf{w}) =
\begin{cases}
-1 ,& \text{if } 1-y \mathbf{x}^\top \mathbf{w}>0\\
+1,& \text{otherwise} \, .
\end{cases}
\]
When $-1 <y \mathbf{x}^\top \mathbf{w}_\mathrm{sup}<1$, we can use Equation~\eqref{eq:solutionq} to find $q=\tfrac{1}{2}$. Otherwise, $\phi'(\mathbf{x}^\top \mathbf{w}_\mathrm{sup}) + \phi'(-\mathbf{x}^\top \mathbf{w}_\mathrm{sup}) = 0$ but unlike for decreasing losses, now there is no $q$ that makes the gradient of the semi-supervised objective in the supervised solution equal to zero. In that case, when we have a single unlabeled object, the semi-supervised solution is an improvement over the supervised solution. For the case of multiple unlabeled objects it may again be possible to find a vector of responsibilities $\mathbf{q} \in [0,1]^U$ that recovers the supervised solution. Again, Theorem~\ref{eq:possible} offers a sufficient condition where the semi-supervised solution must improve over its supervised counterpart.

\subsection*{Proof: Improvement for Quadratic Loss}
\begin{proof}
Whether it is possible to find some $\mathbf{q}$ for which minimizing the semi-supervised objective gives the supervised solution in case of the quadratic loss comes down to the question whether the system of equations
\begin{equation} \label{conditionquadratic}
\mathbf{X}_\mathrm{u} ^\top \mathbf{q} = \frac{1}{2} ( \mathbf{X}_\mathrm{u}^\top \mathbf{1} + \mathbf{X}_\mathrm{u}^\top \mathbf{X}_\mathrm{u} \mathbf{w}_\mathrm{sup} )
\end{equation}
has a solution $\mathbf{q} \in [0,1]^U$.
Let $(\mathbf{X}_\mathrm{u}^\top)^+$ denote the Moore-Penrose pseudo-inverse of $\mathbf{X}_\mathrm{u}^\top$. We consider two scenarios: $d \geq U$, the number of unlabeled objects is smaller or equal to the dimensionality of the feature vectors, and $d \leq U$, where we have more unlabeled objects than dimensions.

If $d \geq U$, the pseudo-inverse can be written as $(\mathbf{X}_\mathrm{u} \mathbf{X}_\mathrm{u}^\top)^{-1} \mathbf{X}_\mathrm{u}$ meaning we have a unique solution
\[
\mathbf{q} = \frac{1}{2} (\mathbf{1} + \mathbf{X}_\mathrm{u} \mathbf{w}_\mathrm{sup})
\]
and so the supervised solution cannot be recovered unless $\mathbf{X}_\mathrm{u} \mathbf{w}_\mathrm{sup} \in [-1,1]^U$.

If $d \leq U$, the pseudo-inverse can be written as $(\mathbf{X}_\mathrm{u}^\top)^+ = \mathbf{X}_\mathrm{u} (\mathbf{X}_\mathrm{u}^\top \mathbf{X}_\mathrm{u})^{-1}$. Rewriting Equation~\eqref{conditionquadratic} in terms of $\mathbf{r} = 2\mathbf{q}-1$, the condition for improvement is that
\[\mathbf{X}_\mathrm{u} ^\top \mathbf{r} = \mathbf{X}_\mathrm{u}^\top \mathbf{X}_\mathrm{u} \mathbf{w}_\mathrm{sup} \]
has no solution $\mathbf{r} \in [-1,1]^U$. Solving this using the pseudo-inverse we find the solution $\mathbf{r}^+$ with the smallest norm among all possible solutions:
\[\mathbf{r}^+ = \mathbf{X}_\mathrm{u} \mathbf{w}_\mathrm{sup} \, .\]
We therefore have for any solution $\mathbf{r}$ that 
\[ \| \mathbf{r} \|_2 \ge \| \mathbf{X}_\mathrm{u} \mathbf{w}_\mathrm{sup} \|_2 \]
and so if $\| \mathbf{X}_\mathrm{u} \mathbf{w}_\mathrm{sup} \|_2 > \sqrt{U}$, this implies that every solution $\mathbf{r}$ lies outside of the unit cube $[-1,1]^U$ and no proper solution of responsibilities exists.
\end{proof}

\end{document}